\NewDocumentCommand{\longdash}{ O{2} }
 {
  --\prg_replicate:nn { #1 - 1 } { \negthinspace -- }
 }
\newcommand{\vect}[1]{\boldsymbol{#1}}
\def\R{{\mathbb{R}}}
\def\e{{\mathbf{e}}}
\def\M{{\mathit{M}}}
\def\K{{\mathbf{K}}}
\def\S{{\boldsymbol{\Delta}}}
\def\softmax{{\boldsymbol{\sigma}}}
\DeclareMathOperator*{\argmax}{arg\,max}
\DeclareMathOperator*{\logsumexp}{log\,sum\,exp}
\newtheorem{theorem}{Theorem}[section]
\newtheorem{lemma}[theorem]{Lemma}
\newcommand{\irow}[1]{
  \begin{smallmatrix}(#1)\end{smallmatrix}%
}
\title{Implicit Background Estimation for Semantic Segmentation}
\name{Charles Lehman, Dogancan Temel, and Ghassan AlRegib \thanks{}}
\address{Center for Signal and Information Processing,\\ School of Electrical and Computer Engineering,\\ Georgia Institute of Technology, Atlanta, GA, 30332-0250 USA\\ \{charlie.k.lehman,cantemel,alregib\}@gatech.edu}
\begin{document}

\onecolumn 

\begin{description}[labelindent=1cm,leftmargin=3cm,style=multiline]

\item[\textbf{Citation}]{C. Lehman, D. Temel and G. AlRegib, "Implicit Background Estimation for Semantic Segmentation," IEEE International Conference on Image Processing (ICIP), Taipei, Taiwan, 2019.
} \\



\item[\textbf{Code}]{\url{https://github.com/olivesgatech/implicit-background-estimation}} \\

\item[\textbf{Bib}] {
@INPROCEEDINGS\{Lehman2019,\\ 
author=\{C. Lehman and D. Temel and G. AIRegib\},\\ 
booktitle=\{IEEE International Conference on Image Processing (ICIP)\},\\ 
title=\{Implicit Background Estimation for Semantic Segmentation\},\\ 
year=\{2019\},\}\\
} \\

\item[\textbf{Copyright}]{\textcopyright 2019 IEEE. Personal use of this material is permitted. Permission from IEEE must be obtained for all other uses, in any current or future media, including reprinting/republishing this material for advertising or promotional purposes,
creating new collective works, for resale or redistribution to servers or lists, or reuse of any copyrighted component
of this work in other works. } \\

\item[\textbf{Contact}]{\href{mailto:alregib@gatech.edu}{alregib@gatech.edu}~~~~~~~\url{https://ghassanalregib.com/} \\ \href{mailto:charlie.k.lehman@gmail.com
}{charlie.k.lehman@gmail.com~~~~~~~\url{https://charlielehman.github.io/}
} \\ \href{mailto:dcantemel@gmail.com}{dcantemel@gmail.com}~~~~~~~\url{http://cantemel.com/}}
\end{description} 

\thispagestyle{empty}
\newpage
\clearpage

\twocolumn

\maketitle
\begin{abstract}
Scene understanding and semantic segmentation are at the core of many computer vision tasks, many of which, involve interacting with humans in potentially dangerous ways.  It is therefore paramount that techniques for principled design of robust models be developed.  In this paper, we provide analytic and empirical evidence that correcting potentially errant non-distinct mappings that result from the softmax function can result in improving robustness characteristics on a state-of-the-art semantic segmentation model with minimal impact to performance and minimal changes to the code base.
\end{abstract}
\begin{keywords}
Scene understanding, semantic segmentation, robustness, out-of-distribution detection, model calibration.    
\end{keywords}
\section{Introduction}
\label{sec:intro}
The progress in the semantic segmentation task is in large part thanks to improvements in deep architecture design \cite{DBLP:journals/corr/ChenPK0Y16,DBLP:journals/corr/ChenPSA17,chen2018deeplab, chen2018encoder, NohLearningdeconvolutionnetwork2015,Yasrab_2017} and the increase in annotated data \cite{everingham2010pascal,HariharanSemanticcontoursinverse2011,zhou2017scene,Temel2017_NIPSW,Temel2018_SPM,Temel2019_VIP,Temel2018_CUREOR}.  
As progress continues to improve, these models will leave labs and become more prevalent in real-world scenarios. 
The tasks of out-of-distribution (OOD) detection \cite{hendrycks17baseline, Hendrycks2018DeepAD} and error calibration \cite{naeini2015obtaining, guo2017calibration} are core pillars to designing robust models for real-world application.
Despite the progress for improving the robustness of deep image classifiers, there is no such effort with semantic segmentation.
It is then important to improve understanding of the failure modes of semantic segmentation models in order to develop a principled approach to robust design. 

In this paper, we will adapt the image classification techniques for OOD detection and error calibration to semantic segmentation.  Also, we will provide evidence of a possible flaw in the classifier design that exists in many state-of-the-art semantic segmentation models. In particular, we will demonstrate the effects of non-distinction arising from $softmax$ that occur between the background class and another class. Also, we will demonstrate how to reduce those effects by restricting how the background is classified. 

\textbf{Properties of Softmax}: A semantic segmentation model, $\M$, is a classification model performed at each RGB-pixel of an image, $\vect{x}\in\R^{3\times W\times H}$, where $\M: \R^{3\times W\times H} \rightarrow \R^{k\times W\times H}$. When optimized with cross-entropy each RGB-pixel is mapped to a $k$-dimensional representation space, $\K = \R^k$, as a vector, $\vect{v}\in\mathbf{K}$. In-sample pixels, $\vect{x}[n,m]\leftrightarrow c$, correspond to a class label $c$ within the set of class labels, $\mathbf{C} = \{c: c =0,1,\dots,k-1\} $. Each pixel is classified by selecting the maximum component, $\argmax_{\vect{v}} \vect{v} = c $.   The softmax operator \eqref{eq:softmax}, which is used during optimization, is surjective-only from $\K$ onto the interior of the $(k-1)$-simplex. To simplify notation, let a vector of exponentials be $\e^{\vect{v}}= \irow{e^{v_0}& e^{v_1}&\dots&e^{v_{k-1}}}$.
\begin{equation}
\label{eq:softmax}
\begin{aligned}
\softmax :\K\rightarrow\S^{k-1}\\
\softmax(\vect{v}) = \frac{\e^{\vect{v}}}{\sum^{k-1}_{i=0} e^{v_i}} 
\end{aligned}
\end{equation}

\begin{lemma}
\label{lem:surjection}
For $\softmax$ to be a surjection, it must be true that for every $\vect{s}\in \S^{k-1}$, there is a $\vect{v}\in\K$ such that $\softmax(\vect{v})=\vect{s}$.
\end{lemma}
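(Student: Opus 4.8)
The plan is to observe that the statement is simply the definition of surjectivity specialized to $\softmax$, and then to use that characterization to pin down exactly what $\softmax$ maps onto. Recall that a map $f:A\to B$ is surjective precisely when for every $b\in B$ there exists $a\in A$ with $f(a)=b$; setting $f=\softmax$, $A=\K$, and $B=\S^{k-1}$ reproduces the claim verbatim, so the lemma itself is a one-line appeal to the definition and needs no computation.

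The point of isolating it is that it reduces the question ``does $\softmax$ hit every point of $\S^{k-1}$?'' to a concrete search for preimages, which is what makes the earlier ``surjective-only'' remark precise. First I would note that the image is contained in the interior: for any $\vect{v}\in\K$ every component $v_i$ is finite, so $e^{v_i}>0$, so each coordinate of $\softmax(\vect{v})=\e^{\vect{v}}/\sum_{i} e^{v_i}$ is strictly positive; hence no point of $\S^{k-1}$ having a zero coordinate --- e.g.\ a vertex $(1,0,\dots,0)$ --- can be an image, and the necessary condition of the lemma fails for the \emph{closed} simplex. Conversely, for any $\vect{s}$ in the interior (all $s_i>0$ and $\sum_{i} s_i=1$) the choice $\vect{v}=(\log s_0,\dots,\log s_{k-1})\in\K$ gives $\softmax(\vect{v})=\vect{s}$, since the denominator equals $\sum_{i} s_i=1$. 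Together these show the image of $\softmax$ is exactly the interior, so $\softmax$ is a surjection onto $\mathrm{int}(\S^{k-1})$ but not onto $\S^{k-1}$.

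The only real care needed --- rather than a genuine obstacle --- is bookkeeping about the closed-versus-open simplex and the fact that the boundary is a limit set never attained in $\K$: letting $v_0\to+\infty$ with the remaining components fixed drives $\softmax(\vect{v})\to(1,0,\dots,0)$, so the boundary is approached arbitrarily closely yet the limiting configuration lies outside $\K=\R^k$. Making this distinction explicit is what turns the definitional restatement into the sharp statement about non-distinction that is used in the remainder of the paper.
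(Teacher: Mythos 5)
Your proposal is correct and rests on the same underlying idea as the paper's proof --- the logarithm inverts $\softmax$ --- but your execution is genuinely tighter and fixes two things the paper glosses over. First, the paper never actually exhibits a preimage: it rewrites the equation as $\vect{v}-\logsumexp\vect{v}=\log\vect{s}$ and argues only that both sides lie in the negative orthant $(-\infty,0]^k$, which shows the ranges are compatible but does not by itself establish that every $\vect{s}$ is attained. You instead write down the explicit candidate $\vect{v}=(\log s_0,\dots,\log s_{k-1})$ and verify $\softmax(\vect{v})=\vect{s}$ by noting the denominator is $\sum_i s_i=1$; that one-line verification is the step the paper's argument is missing. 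Second, you correctly separate the closed simplex from its interior: the paper's claim that $\log\vect{s}\in(-\infty,0]^k$ silently assumes all $s_i>0$, and your observation that every coordinate of $\softmax(\vect{v})$ is strictly positive (so boundary points such as vertices are only approached in the limit $v_0\to+\infty$, never attained) makes precise the ``surjective-only onto the interior'' remark that the paper states in prose but does not prove. The only caution is that the lemma as written takes $\S^{k-1}$ as the codomain, so your sharper conclusion --- surjective onto $\mathrm{int}(\S^{k-1})$ but not onto the closed simplex --- is a refinement of, rather than a literal proof of, the stated claim; it is, however, clearly the statement the authors intend, given their earlier definition of $\softmax$ as mapping onto the interior.
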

\begin{proof}

We can rewrite eq. \eqref{eq:softmax} as
\begin{align*}
\vect{v} - \logsumexp{\vect{v}} &= \log\vect{s}
\end{align*}
By taking the $\log$ of the equality, components of the right hand side now have a new range $\log \vect{s}\in(-\infty, 0]^k$.
On the left hand side of the equality, the operator $\logsumexp{\vect{v}}$ can be thought of as a smooth version of $\max{\vect{v}}$, which has the property, $\logsumexp{\vect{v}} \geq \max{\vect{v}}$. Though exchanging $\max$ and $\logsumexp$ is not essential to the proof, it simplifies the argument that $\logsumexp$ restricts the range of the left hand side to the negative orthant of $\K$. It is then equivalent to let $\vect{w} = \vect{v} - \logsumexp{\vect{v}}$ if $\vect{w}\in\K$, $\vect{w}\in(-\infty,0]^k$.  Finally, $\vect{w} = \log\vect{s}$, therefore $\softmax$ is surjective.
\end{proof}
There are a two cases where $\softmax$ is not distinctive: all components are equal and some components are equal while all others approach $-\infty$. 
\begin{lemma}
\label{lem:noinj1}
Given any two vectors, $\vect{v}, \vect{w} \in \K$,  where $\vect{v}\not= \vect{w}$, $v_i=v_j$ and $w_i=w_j \forall i,j\in[0,k-1]$,  we claim that $\softmax(\vect{v}) = \softmax(\vect{w})$.
\end{lemma}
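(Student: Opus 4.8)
The plan is to recognize that the hypotheses force each of the two vectors to be a constant multiple of the all-ones vector, and then to exploit the shift-invariance of $\softmax$. First I would note that $v_i = v_j$ for all $i,j$ means there is a scalar $a$ with $\vect{v} = a\vect{1}$, and similarly $\vect{w} = b\vect{1}$ for some scalar $b$; since $\vect{v}\neq\vect{w}$ we must have $a\neq b$, so the two vectors are genuinely distinct points of $\K$ and the claimed failure of distinctiveness is non-vacuous.

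Next I would establish the key fact that $\softmax$ is invariant under adding a constant to every component: for any $c\in\R$ and any $\vect{u}\in\K$,
\begin{equation*}
\softmax(\vect{u}+c\vect{1}) = \frac{\e^{\vect{u}+c\vect{1}}}{\sum_{i=0}^{k-1} e^{u_i+c}} = \frac{e^{c}\,\e^{\vect{u}}}{e^{c}\sum_{i=0}^{k-1} e^{u_i}} = \softmax(\vect{u}),
\end{equation*}
since the common factor $e^{c}$ cancels between numerator and denominator. Equivalently, in the logarithmic form used in the proof of Lemma~\ref{lem:surjection}, $(\vect{u}+c\vect{1}) - \logsumexp(\vect{u}+c\vect{1}) = \vect{u} - \logsumexp\vect{u}$, so the whole coset $\{\vect{u}+c\vect{1} : c\in\R\}$ maps to one point.

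With this in hand I would finish by writing $\vect{w} = \vect{v} + (b-a)\vect{1}$ and applying the shift-invariance with $c = b-a$, which gives $\softmax(\vect{w}) = \softmax(\vect{v})$ at once. As a sanity check one can also compute both sides explicitly: each component of $\softmax(a\vect{1})$ equals $e^{a}/(k\,e^{a}) = 1/k$, so $\softmax(a\vect{1}) = \tfrac{1}{k}\vect{1}$ independently of $a$, and likewise for $b$; both images land on the centroid of $\S^{k-1}$.

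Honestly there is no substantive obstacle here — the statement is essentially the remark that $\softmax$ collapses the one-dimensional family $\{a\vect{1} : a\in\R\}$ to a single point, which is the elementary instance of non-injectivity. The only things to watch are interpreting the index condition ``$v_i=v_j\ \forall i,j$'' correctly as ``all components coincide'' rather than something weaker, and making explicit use of $\vect{v}\neq\vect{w}$ so that the conclusion genuinely witnesses a loss of injectivity rather than the trivial equality $\softmax(\vect{v})=\softmax(\vect{v})$.
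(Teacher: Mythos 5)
Your proof is correct, and it actually proves the lemma in the direction it is literally stated: from the hypothesis that each vector is constant (so $\vect{v}=a\vect{1}$, $\vect{w}=b\vect{1}$ with $a\neq b$) you conclude $\softmax(\vect{v})=\softmax(\vect{w})=\tfrac{1}{k}\vect{1}$, either by the shift-invariance identity $\softmax(\vect{u}+c\vect{1})=\softmax(\vect{u})$ or by direct evaluation. The paper argues in the opposite direction: it \emph{starts} from the equality $\softmax(\vect{v})=\softmax(\vect{w})$, cross-multiplies to get $\e^{\vect{v}}\langle\e^{\vect{w}},\mathbf{1}\rangle=\e^{\vect{w}}\langle\e^{\vect{v}},\mathbf{1}\rangle$, reads this as a symmetry condition on the outer product $\e^{\vect{v}}\otimes\e^{\vect{w}}$ (equivalently on the matrix $A$ with $A_{ij}=v_i+w_j$), and deduces that the components of each vector must all coincide. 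That is really a proof of the converse --- a characterization of which distinct pairs collide under $\softmax$ --- whereas your argument is the clean forward implication needed to exhibit non-injectivity. Your route is more elementary and makes the underlying mechanism explicit (collapse of the coset $\{\vect{u}+c\vect{1}:c\in\R\}$, which is the general reason $\softmax$ is never injective, not only on constant vectors); the paper's route, taken at face value, buys the additional claim that among ``fully tied'' vectors these are the only collisions. Both establish the lemma; yours matches the stated implication more directly.
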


\begin{proof}
Let $\mathbf{1} = \irow{1&1&\dots&1}$ and $\e^{\vect{v}}= \irow{e^{v_0}& e^{v_1}&\dots&e^{v_{k-1}}}$
\begin{align*}
\softmax(\vect{v}) = \frac{\e^{\vect{v}}}{\langle \e^{\vect{v}}, \mathbf{1} \rangle}  &=\frac{\e^{\vect{w}}}{\langle \e^{\vect{w}}, \mathbf{1} \rangle} = \softmax(\vect{w})\\
\e^{\vect{v}}\langle \e^{\vect{w}}, \mathbf{1} \rangle  &=\e^{\vect{w}}\langle \e^{\vect{v}}, \mathbf{1} \rangle\\
(\e^{\vect{v}} \otimes \e^{\vect{w}}) \mathbf{1}  &=(\e^{\vect{w}}\otimes \e^{\vect{v}}) \mathbf{1}
\end{align*}
which implies the matrix exponential $e^A = \e^{\vect{v}}\otimes \e^{\vect{w}}$ is symmetric,
\begin{equation}
\label{eq:surjection}
A = \begin{bmatrix}
v_0+w_0 & v_0+w_1 & \cdots & v_0 + w_{k-1} \\
v_1+w_0 & v_1+w_1 & \cdots & v_1 + w_{k-1} \\
\vdots & \vdots & \ddots & \vdots \\
v_{k-1}+w_0 & v_{k-1} + w_1 & \cdots & v_{k-1} + w_{k-1} \\
\end{bmatrix} \\
\end{equation}
and when combined with the fact $\vect{v} \not = \vect{w}$, the entries of $\vect{v}$ and $\vect{w}$ must follow 
\begin{align*}
v_0 &= v_1 = \dots = v_{k-1}\\
w_0 &= w_1 = \dots = w_{k-1}
\end{align*}
Therefore, $\softmax$ is not injective.
\end{proof}

\begin{lemma}
\label{lem:noinj2}
Given any two vectors, $\vect{v}, \vect{w} \in \K$ of the form $\irow{z&\dots&z&x_n&\dots&x_m&z&\dots&z}^T$, where $\vect{v}\not= \vect{w}$, $x_i=x_j \forall i,j\in[n,m]$, we claim that $\lim\limits_{z\rightarrow-\infty}\softmax(\vect{v}) = \softmax(\vect{w})$.
\end{lemma}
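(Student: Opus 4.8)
The plan is to evaluate $\lim_{z\to-\infty}\softmax(\vect v)$ coordinate by coordinate, show it equals the uniform distribution supported on the block $[n,m]$, and observe that this limit does not depend on the common block value; running the same computation for $\vect w$ then yields the identical limit and the claim follows. Write $\ell = m-n+1$ for the number of ``active'' coordinates and $k-\ell$ for the number of coordinates equal to $z$, and (since $\vect v$ and $\vect w$ share the same $z$ and the same support block $[n,m]$) let $x$ and $y$ denote the common block values of $\vect v$ and $\vect w$ respectively, so that $\vect v\neq\vect w$ forces $x\neq y$.

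First I would expand the normalizing sum as $\sum_{i=0}^{k-1} e^{v_i} = (k-\ell)\,e^{z} + \ell\,e^{x}$, so that for $p\notin[n,m]$ the $p$-th component of $\softmax(\vect v)$ is $e^{z}/\big((k-\ell)e^{z}+\ell e^{x}\big)$, and for $p\in[n,m]$ it is $e^{x}/\big((k-\ell)e^{z}+\ell e^{x}\big)$. Dividing numerator and denominator by the fixed positive constant $e^{x}$ and letting $z\to-\infty$, so $e^{z}\to 0$, the off-block components tend to $0$ and the on-block components tend to $1/\ell$. Hence $\lim_{z\to-\infty}\softmax(\vect v)$ equals the vector that is $1/\ell$ on coordinates $n,\dots,m$ and $0$ elsewhere. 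Since this limit never involved $x$, repeating the argument with $\vect w$ and $y$ in place of $\vect v$ and $x$ gives the same limit vector, and the two limits coincide even though $x\neq y$; so $\softmax$ is not injective in this limit.

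I do not expect a genuine obstacle here; the only subtlety is bookkeeping around the statement itself. The limit is taken with respect to the parameter $z$ that both vectors share, so it must be applied on both sides (equivalently, $\softmax(\vect w)$ is read as the continuous extension at $z=-\infty$); and one should dispatch the degenerate case $\ell=k$, in which the $z$-block is empty and the statement reduces to Lemma \ref{lem:noinj1}. It is also worth recording that the common limit $\tfrac{1}{\ell}\mathbf 1_{[n,m]}$ is exactly the image under $\softmax$ of an all-equal vector restricted to the coordinates $[n,m]$, i.e.\ the non-distinctness of Lemma \ref{lem:noinj1} now realized on a face of the simplex as $z\to-\infty$, so the failure of injectivity is inherited directly from that lemma.
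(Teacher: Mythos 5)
Your proof is correct, and it takes a genuinely different route from the paper's. The paper recycles the machinery of Lemma~\ref{lem:noinj1}: it invokes the criterion that $\softmax(\vect{v}) = \softmax(\vect{w})$ holds exactly when the outer-product matrix $e^A = \e^{\vect{v}}\otimes\e^{\vect{w}}$ from \eqref{eq:surjection} is symmetric, and then argues that in the limit $z\rightarrow-\infty$ this matrix is supported only on the block $[n,m]\times[n,m]$, so the symmetry requirement collapses to the within-block equality assumed in the hypothesis. You instead compute the limit coordinate-by-coordinate, expanding the normalizer as $(k-\ell)e^{z}+\ell e^{x}$ and showing the limit is the uniform vector $\tfrac{1}{\ell}\mathbf{1}_{[n,m]}$ independently of the block value. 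Your route is more elementary and arguably more informative: it exhibits the common limit explicitly as a point on a face of the simplex (which directly substantiates the paper's later remark about membership of higher-order faces of $\S^{k-1}$), and it sidesteps a soft spot in the paper's argument --- the claim that the restricted matrix ``is only symmetric when'' all block entries of $\vect{v}$ and of $\vect{w}$ are equal, whereas symmetry actually requires only $v_i - v_j = w_i - w_j$ on the block, so the paper asserts a sufficient condition as if it were necessary. You also correctly flag two bookkeeping points the paper leaves implicit: the limit must be read as applying to both sides of the claimed equality since both vectors depend on $z$, and the degenerate case $\ell=k$ reduces to Lemma~\ref{lem:noinj1}.
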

\begin{proof}
We begin by taking the limit, 
\begin{align*}
    \lim\limits_{z\rightarrow-\infty} \e^{\vect{v}} = \irow{0\dots&0&e^{x_n}&\dots&e^{x_m}&0&\dots0}^T
\end{align*}
Then from \eqref{eq:surjection}, we only need to show that the symmetry $e^A = e^{A^T}$ holds. The matrix exponential $\lim\limits_{z\rightarrow-\infty}e^{A}$ only has non-zero entries at $i,j\in[n,m]$, which is only symmetric when,
\begin{align*}
v_n &= v_{n+1} = \dots = v_m\\
w_n &= w_{n+1} = \dots = w_m
\end{align*}
Therefore, $\lim\limits_{z\rightarrow-\infty}\softmax(\vect{v})$ is not injective.
\end{proof}
Given the proof from Lemmas \ref{lem:surjection}, \ref{lem:noinj1}, and \ref{lem:noinj2}, $\softmax$ is surjective and not distinct in the domain of $\softmax$, thereby, losing information when two or more components approach equivalence.  We will exploit this fact to constrain our model to predict the background class only when all in-domain components lay in the negative-orthant.


\label{sec:bgest}

\begin{figure}[t]
\hspace*{-.4cm}   
    \centering
\begin{tabular}{rcc}
   %
   \includegraphics[width=\linewidth]{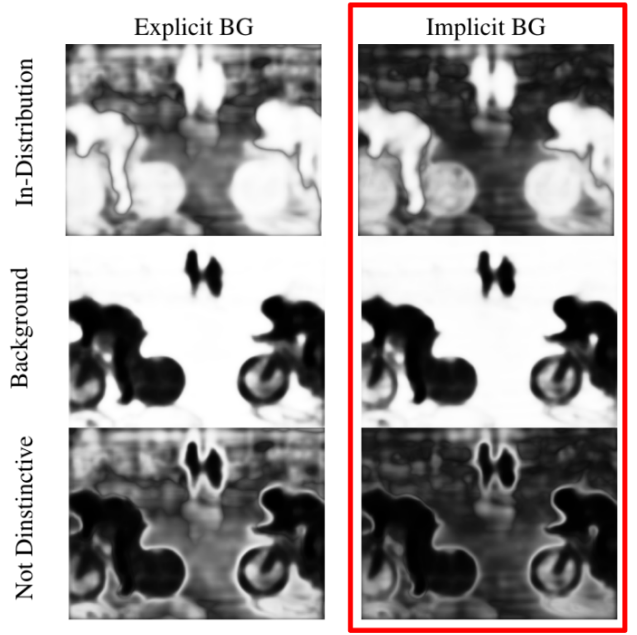}
\end{tabular}
    \caption{Comparison of membership maps of an unmodified model that has explicit background estimation and the modified model that implicitly estimates the background.  The product of the In-Distribution and Background membership maps result in the Not Distinctive visualization. It is apparent that the intensity of non-distinctiveness is greater for explicit background estimation than implicit background estimation.}
\label{fig:confmap}%
\end{figure}

\section{Method}
\textbf{Non Distinctiveness}: The necessity to consider the background as a class for semantic segmentation is a consequence of not densely annotating every pixel in each image. 
The current state-of-the-art semantic segmentation models treat background as a separate class and dedicate model parameters for use in its estimation \cite{DBLP:journals/corr/ChenPK0Y16,DBLP:journals/corr/ChenPSA17,chen2018deeplab, chen2018encoder, NohLearningdeconvolutionnetwork2015,Yasrab_2017}. 
To better understand the effects of not preserving distinctiveness we propose the following method for measuring distinctiveness. Consider the following construction of the indices of a pixel in representation space, $\K$, or on the simplex, $\S^{k-1}$:
\begin{equation}
\label{eq:index}
\begin{aligned}
\vect{x}&= \irow{x_{BG}&x_{1}&x_{2}&\cdots&x_{k}}^T\\
\vect{x}_{ID}&= \irow{x_{1}&x_{2}&\cdots&x_{k}}^T\\
\end{aligned}
\end{equation}
We can use the indexings on $\vect{v}$ and $\softmax$ from \eqref{eq:index} to define membership indicators for in-distribution, background, and non-distinct:  
\begin{equation}
\label{eq:nodist}
\begin{aligned}
\bm{\mu}_{ID}(\vect{v}) &= \max\softmax(\vect{v}_{ID})\\
\bm{\mu}_{BG}(\vect{v}) &= \softmax(\vect{v})_{BG}\\
\bm{\mu}_{ND}(\vect{v}) &= \mathbf{\mu}_{BG}\mathbf{\mu}_{ID}
\end{aligned}
\end{equation}
We visualize the output of \eqref{eq:nodist} in Fig. \ref{fig:confmap}, where it becomes clear that the key role estimating the background class plays is to define the spatial edges of in-distribution predictions by suppressing errant in-distribution components.
This action can also be considered to be an out-of-distribution (OOD) detector as it is explicitly estimating the complement of $\vect{v}_{ID}$. 
Though this does not appear to effect the overall performance of the model in the semantic segmentation task, the non-distinctiveness between background and in-distribution components can lead to robustness issues.

\textbf{Out-of-distribution Detection}: The state-of-the-art OOD detection methods train on one in-distribution dataset and evaluate on an out-of-distribution dataset \cite{hendrycks17baseline, Hendrycks2018DeepAD}. An important distinction between semantic segmentation and image classification tasks, is that it is not necessary in image classification to have a separate class for "background" as the annotations for in-distribution images are at the image level vice pixel level. Due to this difference, we evaluate the background component of the semantic segmentation models as built-in OOD detector.

\textbf{Calibration Error}: 
Empirical observations made by \cite{guo2017calibration, HintonDistillingknowledgeneural2015} have shown that a consequence of $\softmax$ is the tendency for models to map in-distribution inputs as large magnitudes in $\K$ thereby assigning membership very close to a $\S^0$ face, which causes the model to lack confidence calibration.  They also demonstrated that calibration can be improved with use of temperature scaling, which re-scaled the operand for $\softmax$.  In effect, temperature scaling reduced the effects discussed in the proof for Lemma 3.3, allowing for membership of higher-order faces of $\S^{k-1}$ to emerge more often. We will study the effects of implicit background estimation on model calibration with Expected Calibration Error (ECE) as formulated by \cite{naeini2015obtaining, guo2017calibration}.

\begin{figure}[t]
\includegraphics[width=0.9\linewidth]{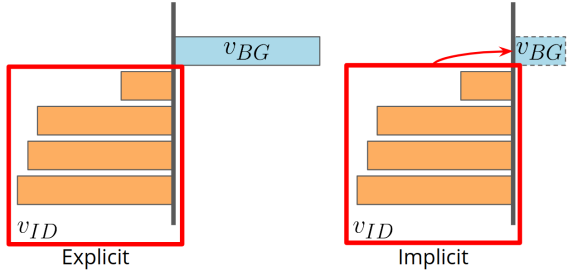}
\centering
\caption{The bars represent magnitudes of $\vect{v}\in \K^5$.  Bars enclosed by the red box are the in-distrubution components, $\vect{v}_{ID}$, while the $\vect{v}_{BG}$ is the background component.  The plot labeled Explicit has $\vect{v}_{BG}$ as an independent component, while Implicit has $\vect{v}_{BG}$ depend on $-\logsumexp(\vect{v}_{ID})$.}
\label{fig:impexpbg}
\end{figure}

\textbf{Implicit Background Estimation}: To investigate the  effects of non-distinctiveness of background and in-distribution components we propose a method of implicitly estimating background. Our method imposes a positive orthant constraint on detection for any class by parameterizing the background component with the composite vector shown in \eqref{eq:impbg}.

To demonstrate that, by implicitly estimating the background, there is a reduction in the possible non-distinctive cases demonstrated in the proofs for Lemmas \ref{lem:noinj1}, \ref{lem:noinj2}, we provide the following:
\begin{theorem}
There exists some $\hat{\K}\subset\K$ where $\max \vect{v} \ge 0\;$ for all $\vect{v}\in\hat{\K}$.  Furthermore, such a set exists while preserving the domain of each component. Finally, any $\vect{v}\in\hat{\K}$ with an preserved domain will reduce the number of non-distinctive cases.
\label{thm:posorth}
\end{theorem}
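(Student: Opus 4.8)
The overall plan is to exhibit the set $\hat{\K}$ explicitly, verify the nonnegative-orthant property and the domain-preservation clause on it, and then compare its degenerate configurations with those of the unmodified model. For the construction I would take $\hat{\K}$ to be the set of logit vectors on which the background coordinate is tied to the in-distribution coordinates by $v_{BG}=-\logsumexp(\vect{v}_{ID})$, as in \eqref{eq:impbg} and Fig.~\ref{fig:impexpbg}, intersected with the nonnegative orthant: $\hat{\K}=\{\vect{v}\in\K:\ v_{BG}=-\logsumexp(\vect{v}_{ID}),\ \max\vect{v}\ge 0\}$, writing $\vect{v}=(v_{BG},v_1,\dots,v_k)$ and $\vect{v}_{ID}=(v_1,\dots,v_k)$ as in \eqref{eq:index}. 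The first assertion then holds by construction, so the real content is that this intersection is still large: I would show that the projection of $\hat{\K}$ onto every coordinate axis is all of $\R$, which is the precise sense in which the domain of each component is preserved. For an in-distribution coordinate $v_i$ this is immediate --- fix it at any real number and raise the remaining in-distribution logits until $\max\vect{v}\ge 0$ --- while for $v_{BG}$ one uses that $\logsumexp:\R^k\to\R$ is onto, so $-\logsumexp(\vect{v}_{ID})$ can be made equal to any prescribed real with $\max\vect{v}\ge 0$ still retained. Lemma~\ref{lem:surjection} reappears here: since $v_{BG}=-\logsumexp(\vect{v}_{ID})$ selects exactly one representative from each softmax-equivalence class $\{\vect{v}+c\mathbf{1}:c\in\R\}$ (the shift invariance $\softmax(\vect{v}+c\mathbf{1})=\softmax(\vect{v})$ being immediate from \eqref{eq:softmax}), passing to $\hat{\K}$ discards no membership vector that the orthant constraint does not already rule out.

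For the reduction of non-distinctive cases I would return to the degeneracies catalogued in Lemmas~\ref{lem:noinj1} and~\ref{lem:noinj2}. In the unmodified model these consist of (i) the whole line $\{c\mathbf{1}:c\in\R\}$ on which all $k+1$ logits coincide, and (ii) for each way of holding one block of coordinates equal while sending the rest to $-\infty$, a limiting non-distinct configuration. Imposing $v_{BG}=-\logsumexp(\vect{v}_{ID})$ collapses or removes exactly the cases that involve the background coordinate: if all coordinates equalled $c$ then $c=-\logsumexp(c,\dots,c)=-c-\log k$, forcing $c=-\tfrac12\log k<0$, a single point, which the orthant constraint then eliminates from $\hat{\K}$ altogether; and whenever the in-distribution logits diverge to $-\infty$ we have $\logsumexp(\vect{v}_{ID})\to-\infty$, hence $v_{BG}\to+\infty$, so the background coordinate can never lie in the diverging block, and the all-coordinates-diverge case is impossible as well (again consistent with $\max\vect{v}\ge 0$). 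The only degeneracies left are those whose equal block lies entirely inside $\vect{v}_{ID}$, so the family of non-distinct cases is strictly smaller, which is the claim.

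The step I expect to be the main obstacle is keeping the first assertion honest. The bare image of the map $\vect{v}_{ID}\mapsto(-\logsumexp(\vect{v}_{ID}),\vect{v}_{ID})$ is \emph{not} contained in the nonnegative orthant --- take every $v_i$ small and negative but with $\sum_i e^{v_i}>1$ --- so $\hat{\K}$ must be the intersection above rather than that entire image, and one then has to check both that the intersection is still coordinatewise full and that it still carries the distinctiveness count. Reconciling the three demands --- $\max\vect{v}\ge 0$, every component ranging over all of $\R$, and every degeneracy through $v_{BG}$ removed --- is the delicate bookkeeping; once $\hat{\K}$ is pinned down correctly, each individual verification is short and uses only $\logsumexp\ge\max$ together with the shift invariance of $\softmax$.
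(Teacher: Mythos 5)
Your proposal starts from the same object as the paper's proof---the composite vector $\vect{v}_{ImpBG}$ of \eqref{eq:impbg}---but handles it more carefully, and in doing so catches a genuine flaw in the paper's argument. The paper inspects only the limits of $v_{BG}=-\logsumexp(\vect{v}_{ID})$ as the in-distribution logits tend to $-\infty$, $0$, and $+\infty$, and from those endpoints concludes that $\max\vect{v}_{ImpBG}\ge 0$ for \emph{every} $\vect{v}_{ID}\in\R^{k-1}$, i.e.\ that the entire image of the implicit-background map already lies in $\hat{\K}$. Your counterexample shows this unconditional claim is false: with two or more in-distribution components all equal to a small negative number, $\sum_i e^{v_{ID_i}}>1$, so $v_{BG}<0$ while every $v_{ID_i}<0$ as well, hence $\max\vect{v}_{ImpBG}<0$. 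The inequality $\logsumexp\ge\max$ only yields $v_{BG}\le -\max\vect{v}_{ID}$, which gives nonnegativity of the max precisely when some $v_{ID_i}\ge 0$ or when $\sum_i e^{v_{ID_i}}\le 1$. Your repair---taking $\hat{\K}$ to be the intersection of that image with $\{\max\vect{v}\ge 0\}$, then separately verifying that each coordinate still ranges over all of $\R$ and that the degeneracies of Lemmas \ref{lem:noinj1} and \ref{lem:noinj2} involving the background coordinate are eliminated---is sound, and it is the only argument offered for the second and third clauses of the theorem: the paper's proof stops at the (overstated) membership claim and never addresses domain preservation or the reduction count at all.

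Two minor corrections to your bookkeeping. With the paper's indexing in \eqref{eq:impbg} there are $k-1$ in-distribution components, so the all-equal configuration forces $c=-\tfrac12\log(k-1)$ rather than $-\tfrac12\log k$ (still negative for $k\ge 3$, so your exclusion argument is unaffected). And a block of the form $\{v_{BG},v_{ID_j}\}$ held at a common finite value while the remaining logits diverge to $-\infty$ does survive both the tying constraint and the orthant condition (it is forced to sit at $0$), but since it is pinned to a single point it admits no pair $\vect{v}\ne\vect{w}$ and therefore contributes no non-distinctiveness; your conclusion that the surviving degeneracies have their equal block effectively confined to $\vect{v}_{ID}$ stands.
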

\begin{proof}
Consider the following operation on the composite vector from \eqref{eq:index}, 
\begin{equation}
\begin{aligned}
\label{eq:impbg}
v_{BG} &= -\logsumexp (\vect{v}_{ID})\\
\vect{v}_{ImpBG} &= \irow{v_{BG}&v_{ID_1}&v_{ID_2}&\cdots&v_{ID_{k-1}}}^T\\
\end{aligned}
\end{equation}
To demonstrate the preservation of domain on each $v_i$, we inspect $\vect{v}_{ImpBG}$ under several limits,
\begin{equation}
\begin{aligned}
\lim\limits_{v_{ID_{i}}\rightarrow-\infty} \vect{v}_{ImpBG} &\implies v_{BG}\rightarrow +\infty\\
\lim\limits_{v_{ID_{i}}\rightarrow 0} \vect{v}_{ImpBG} &\implies v_{BG}\rightarrow 0\\
\lim\limits_{v_{ID_{i}}\rightarrow +\infty} \vect{v}_{ImpBG} &\implies v_{BG}\rightarrow -\infty\\
\end{aligned}
\end{equation}
Therefore, $\max \vect{v}_{ImpBG} \geq 0\quad \forall\vect{v}_{ID}\in \R^{k-1}$, which implies that $\vect{v}_{ImpBG}\in \hat{\K}$.
\end{proof}
The proof from Theorem \ref{thm:posorth}, demonstrates that implicit background estimation enforces $M:\R^{3\times W\times H} \rightarrow \hat{\K}$. We will experimentally verify that training under these constraints do not appreciably affect the performance of semantic segmentation, but improve performance in areas of robustness.

\section{Experiment}
\label{sec:exp}

\begin{figure}[H]
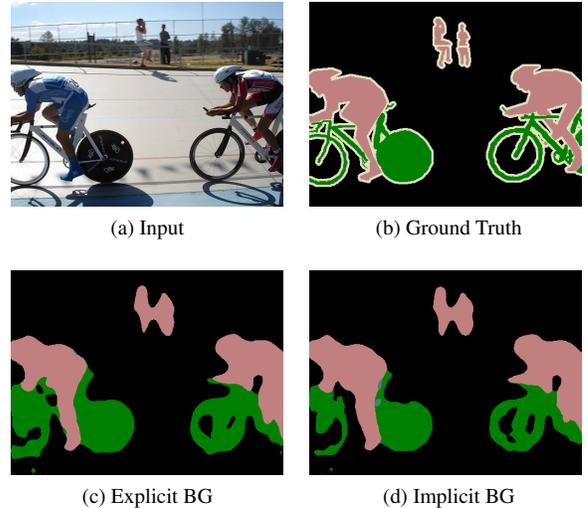

    \centering
    \subfloat[Input]{{\includegraphics[width=0.42\linewidth]{/2007_007836.jpg}}}%
    \quad
    \subfloat[Ground Truth]{{\includegraphics[width=0.42\linewidth]{/2007_007836.png} }}%
    \\
    \subfloat[Explicit BG]{{\includegraphics[width=0.42\linewidth]{/vis_DeepLab_2007_007836_seg.png}}}%
    \quad
    \subfloat[Implicit BG]{{\includegraphics[width=0.42\linewidth]{/vis_DeepLabAttn_2007_007836_seg.png} }}%
\caption{Visual comparison between explicit and implicit background estimation on semantic segmentation task using image 2007\_007836 from PASCAL VOC.}
\label{fig:seg}%
\end{figure}

\textbf{Semantic Segmentation}: The PASCAL VOC 2012 \cite{everingham2010pascal} dataset is used for semantic segmentation evaluation due to its long-standing acceptance as a benchmark and because the "background" annotations make up close to $73\% $ of the pixels in the training set, which is useful for studying the effects of non-distinction.  The dataset consists of 20 object classes, 1464 training images, 1449 validation images, and 1456 test images. Ground-truth annotations are provided with the train and validation sets, while the test set is reserved by the PASCAL VOC evaluation server.  Evaluation of semantic segmentation performance is reported in terms of mean intersection-over-union (mIOU) on the PASCAL VOC 2012 \textit{validation} and \textit{test} sets. The results reported in Table \ref{tab_semsegres}, demonstrate that the implicit background estimation has minimal impact on the semantic segmentation performance. A visual comparison of results are visualized on Fig. \ref{fig:seg}.
\begin{table}[H]
\centering
\begin{tabular}{r|cc}
\toprule
\textbf{Model}&\textbf{val} & \textbf{test}\\ \toprule
Explicit BG \cite{chen2018encoder} & 78.85 & \longdash[2]\\
Explicit BG (reproduced) & 79.96 & \textbf{76.79}\\ \hline
Implicit BG & \textbf{80.06} & 76.44\\ 
\hline
\end{tabular}
\caption{Results in terms of mIOU on PASCAL VOC 2012 \textit{val} set using DeepLabv3+ with ResNet-101 backbone.  We provide the results reported in \cite{chen2018encoder} and our reproduced results in addition to our modified implicit background estimation method. The effect of the implicit background modification is minimal. Higher is better.}
\label{tab_semsegres}
\end{table}

\textbf{Model}: For evaluation we use two versions of DeepLabV3+ with a ResNet-101 \cite{chen2018encoder} backbone as our base model and make two variants: unmodified and with the modification for implicit background estimation. The ResNet-101 backbone is the pretrained model provided in the torchvision library \cite{paszke2017automatic}.

\textbf{Training}: The models were trained on PASCAL VOC 2012 \textit{train} set with the augmented annotations from \cite{HariharanSemanticcontoursinverse2011}, totalling in 10582 \textit{train\_aug} images. This is followed by fine-tune training on PASCAL VOC \textit{train}. Both steps use the same parameters as \cite{chen2018encoder}. 

\begin{table}[H]
\centering
\begin{tabular}{r|cc}
\toprule
\textbf{Model} & \textbf{DTD} & \textbf{Noise} \\ \toprule
Explicit BG & 78.90 & 99.94 \\ \hline
Implicit BG & \textbf{82.46} & \textbf{100} \\ 
\hline
\end{tabular}
\caption{Results for the out-of-distribution detection on the Describable Texture Dataset and generated Gaussian White Noise. Implicit background estimation clearly out performs the unmodified model. Higher is better.}
\label{tab:ood}
\end{table}

\textbf{Out-of-distribution Detection}: The Describable Texture Dataset (DTD) \cite{cimpoi14describing} and a set of generated Gaussian White Noise (GWN) images are utilized for evaluating the out-of-the-box performance of detecting OOD inputs. In \cite{hendrycks17baseline, Hendrycks2018DeepAD}, area under the receiver operator characteristic (AUROC) curve was used in reporting for detecting OOD inputs for image classification. This was necessary for image classification tasks in order to evaluate at all possible decision boundaries.  However, to evaluate semantic segmentation models this is unnecessary since they are trained to directly classify background class at each pixel and already have an established decision boundary. Therefore, we will be reporting mIOU for the background class for the OOD detection task. The results reported in Table \ref{tab:ood} demonstrate that implicit background estimation yields improved robustness in the OOD detection task.

\begin{table}[H]
\centering
\begin{tabular}{r|ccc}
\toprule
\textbf{Model} & \textbf{VOC} & \textbf{DTD} & \textbf{Noise} \\ \toprule
Explicit BG & 6.02 & 9.15 & 7.02\\ 
\hline
Implicit BG & \textbf{4.58} & \textbf{4.83} & \textbf{6.52}  \\
\hline
\end{tabular}
\caption{Results for Expected Calibration Error on PASCAL VOC 2012, Describable Texture Dataset, and generated Gaussian White Noise.  Implicit background estimation improves model calibration across each dataset. Lower is better.}
\label{tab_ece}
\end{table}
\textbf{Model Calibration}: A calibrated model produces confidence scores that are the same as the expected accuracy. For example, all predictions with confidence $50\%$ should have $50\%$ accuracy.  Evaluation of model calibration is reported in terms of Expected Calibration Error (ECE) \cite{naeini2015obtaining, guo2017calibration}. ECE for both the unmodified and modified models are evaluated on PASCAL VOC 2012, DTD, and generated GWN. Results reported in Table \ref{tab_ece} clearly show that implicitly estimating the background improves model calibration without implementing the aforementioned temperature scaling.

\begin{table}[H]
\centering
\begin{tabular}{r|ccc}
\toprule
\textbf{Model} & \textbf{VOC} & \textbf{DTD} & \textbf{Noise} \\ \toprule
Explicit BG & 25.21 & 35.39 & 52.93\\ \hline
Implicit BG & \textbf{17.99} & \textbf{28.5} & \textbf{37.18} \\ 
\hline
\end{tabular}
\caption{Results of Expected Non-Distinctiveness are reported on the PASCAL VOC 2012, Describable Texture Dataset, and generated Gaussian White Noise data.  The implicit background modification out-performs the unmodified model.}
\label{tab:end}
\end{table}
\textbf{Expected Non-Distinctiveness}:  We also evaluate both models on the expected non-distinctiveness for the PASCAL VOC 2012 \textit{validation}, DTD, and GWN datasets.  This is accomplished by computing $\mathbf{E}[\bm{\mu}_{ND}(\vect{v})]$ for each dataset.  Results reported on Table \ref{tab:end} show that implicit background significantly reduces the amount of non-distinctiveness, thereby providing evidence of the analytic results from Theorem \ref{thm:posorth}.

\section{Conclusion}
We have provided both analytical and empirical evidence that implicit background estimation improves the robustness of a deep semantic segmentation network by limiting the non-distinctive mappings onto the domain of $\softmax$. Also, the increase in robustness comes without significantly affecting performance in the semantic segmentation task.  Finally, implementing implicit background estimation on any semantic segmentation model can be accomplished in about three lines of code.

\newpage
\bibliographystyle{IEEEtran}
\bibliography{refs}
\end{document}